\theoremstyle{definition}
\newtheorem{definition}{Definition}[section]
\newtheorem{theorem}{Theorem}[section]
\newtheorem{remark}{Remark}[section]
\theoremstyle{plain}
\newtheorem{example}{Example}[section]
\newcommand{\nd}{\texttt{NegDis}\xspace}
\newcommand{\declare}{\texttt{Declare}\xspace}
\newcommand{\rum}{\texttt{RuM}\xspace}
\newcommand{\declareminer}{\texttt{DeclareMiner}\xspace}
\newcommand{\decminer}{\texttt{DecMiner}\xspace}
\newcommand{\minclos}{\textsc{Cardinality}\xspace}
\newcommand{\subsetclos}{\textsc{Subset}\xspace}
\newcommand\para[1]{\textnormal{\textsf{#1}}}
\newcommand\paragrafo[1]{{\smallskip \noindent \textit{#1}.}}
\newcommand{\sheriff}{sheriffs}
\begin{document}

\title{Process discovery on deviant traces and \\ other stranger things}

\author{Federico~Chesani$^1$, Chiara~Di~Francescomarino$^2$, Chiara~Ghidini$^2$, Daniela~Loreti$^1$, \\
Fabrizio~Maria~Maggi$^3$, Paola~Mello$^1$, Marco~Montali$^3$, Sergio~Tessaris$^3$\\
$^1$ DISI - University of Bologna, Bologna, Italy\\ 
$^2$ Fondazione Bruno Kessler (FBK), Trento, Italy \\
$^3$ Free University of Bozen-Bolzano, Bolzano, Italy \\
E-mail: \url{daniela.loreti@unibo.it}
}

\date{}

\maketitle

\begin{abstract}
As the need to understand and formalise business processes into a model has grown over the last years, 
the process discovery research field has gained more and more importance, developing two different classes of approaches to model representation: procedural and declarative. 
Orthogonally to this classification, the vast majority of works envisage the discovery task as a one-class supervised learning process guided by the traces that are recorded into an input log. 

In this work instead, we focus on declarative processes and embrace the less-popular view of process discovery as a binary supervised learning task, where the input log reports both examples of the normal system execution, and traces representing ``stranger'' behaviours according to the domain semantics. We therefore deepen how the valuable information brought by both these two sets can be extracted and formalised into a model that is ``optimal'' according to user-defined goals. Our approach, namely \nd, is evaluated w.r.t. other relevant works in this field, and shows promising results as regards both the performance and the quality of the obtained solution.
\end{abstract}


\section{Introduction}\label{sec:intro}
The modelling of business processes is an important task to support decision-making in complex industrial and corporate domains. Recent years have seen the birth of the \ac{BPM} research area, focused on the analysis and control of process execution quality, and in particular, the rise in popularity of \emph{process mining} \cite{2012-Aalst}, which encompasses a set of techniques to extract valuable information from event logs. 
\emph{Process discovery} is one of the most investigated process mining techniques. It deals with the automatic learning of a process model from a given set of logged traces, each one representing the digital footprint of the execution of a case. 
Process discovery algorithms are usually classified into two categories according to the language they employ to represent the output model: procedural and declarative.
Procedural techniques envisage the process model as a synthetic description of all possible sequences of actions that the process accepts from an initial to an ending state. Declarative discovery algorithms---which represent the context of this work---return the model as a set of constraints equipped with a declarative, logic-based semantics, and that must be fulfilled by the traces at hand. 
Both approaches have their strengths and weaknesses depending on the characteristics of the considered process. For example, procedural techniques often produce  intuitive models, but may sometimes lead to ``spaghetti''-like outputs \cite{2009-Fahland, 2018b-Maggi}: in these cases declarative-based approaches might be preferable.

Declarative techniques rely on shared metrics to establish the quality of the extracted model, for example in terms of \emph{fitness}, \emph{precision}, \emph{generality}, and \emph{simplicity} \cite{2015-Adriansyah,2014-Broucke,2018-Ponce}. In particular, fitness and precision focus on the quality of the model w.r.t.~the log, i.e., its ability to accept desired traces and reject unlikely ones, respectively; generality measures the model's capability to abstract from the input by reproducing the desired behaviours, which are not assumed to be part of the log in the first place; finally, simplicity is connected to the clarity and understandability of the result for the final user. 

Besides the declarative-procedural classification, process discovery approaches can be also divided into two categories according to their vision on the model-extraction task. 
As also pointed out by Ponce-de-Le\`on et al. \cite{2018-Ponce}, the vast majority of works in the process discovery spectrum (e.g. \cite{2004-Aalst,2003-Weijters,2007-Gunther,2010-Aalst}) can be seen as one-class supervised learning technique, while fewer works (e.g. \cite{2006-Maruster,2009-Goedertier,2009-Chesani}) intend model-extraction as a two-class supervised task---which is driven by the possibility of partitioning the log traces into two sets according to some business or domain-related criterion. Usually these sets are referred to as \emph{positive} and \emph{negative} examples \cite{2018-Ponce}, and the goal is to learn a model that characterises one set w.r.t. the other.

A further consideration stems from the \emph{completeness} of the log. Generally, a log contains and represents only a subset of the possible process executions. Other executions might be accepted or rejected from the viewpoint of the process, but this can be known only when a process model is learned or made available. This territory of \emph{unknown traces} will be ``shaped'' by the learned model, and more precisely by the choice of the discovery technique, and possibly by its configuration parameters. Approaches that consider positive examples only provide a number of heuristics, thus allowing the user to decide to which extent the yet-to-be-seen traces will be accepted or rejected by the discovered model---ranging from the extremity of accepting them all, to the opposite of rejecting them all.
The use of a second class of examples, identified on the basis of some domain-related criterion, allows to introduce some business-related criterion besides the heuristics.

In this work, we focus on declarative process models expressed in the Declare language \cite{2008-Pesic}, and embrace the view of process discovery as a binary supervised learning task. Hence, our starting point is a classification of business traces into two sets, which can be driven by various motivations.
For example, in order to avoid underfitting and overfitting \cite{2010-Aalst}, many authors, as well as the majority of tools, suggest to ignore less-frequent traces (sometimes referred as \emph{deviances} from the usual behaviour \cite{2016-Nguyen}), thus implicitly splitting the log according to a frequency criterion. Another motivation for log partitioning could be related to the domain-specific need to identify ``stranger'' execution traces, e.g., traces asking for more (or less) time than expected to terminate, or undesirable traces that the user might want to avoid in future executions.


%
Independently of the chosen criteria for splitting the log, we adopt the terms negative and positive example sets to identify the resulting partitioning, keeping in mind that the ``negative'' adjective is not necessarily connected to unwanted traces, but rather connected to a sort of ``upside-down world'' of ``stranger'' behaviours. The information carried by the negative example set diverges from that containing the positive examples but---coupled with it---can be used to understand the reasons why differences occur, ultimately providing a more accurate insight of the business process.


For this reason, we hereby focus on learning a set of constraints that is able to reconstruct which traces belong to which set, while---whenever possible---reflecting the user expectations on the quality of the extracted model according to predefined metrics. 
In particular our approach, namely \nd, aims to discover a minimal set of constraints that allow to distinguish between the two classes of examples: in this sense, it can enrich existing approaches that instead provide richer descriptions of the positive example set only. Indeed, our exploitation of the ``upside-down world'' is useful not only to better clarify what should be deemed compliant with the model and what should not, but also to better control the degree of generalisation of the resulting model, as well as to improve its simplicity.

%


The contributions of our work can be listed as follows.
\begin{itemize}
\item A novel discovery approach, \nd, based on the underlying logic semantics of Declare, which makes use of the information brought by the positive and negative example sets to produce declarative models.
\item The adoption of a satisfiability-based technique to identify the models.
\item A heuristic to select the preferred models according to input parameters dealing with generalisation or simplicity.
\item An evaluation of the performance of \nd w.r.t. other relevant works in the same field.
\end{itemize}



\section{Background}\label{sec:back}

Our technique relies on the key concept of \emph{event log}, intending it as a \emph{set} of observed process executions, logged into a file in terms of all the occurred events.
%
%
In this work, we adopt the \ac{XES} storing standard \cite{XES} for the input log. According to this standard, each \emph{event} is related to a specific process \emph{instance}, and describes the occurrence of a well-defined step in the process, namely an \emph{activity}, at a specific timestamp. The logged set of events composing a process instance is addressed as \emph{trace} or \emph{case}. 
From the analysis of the \emph{event log}, we want to extract a Declare \cite{2008-Pesic,2009-Aalst} \emph{model} of the process.
Declare is one of the most used languages for declarative process modeling. Thanks to its declarative nature, it does not represent the process as a sequence of activities from a start to an end, but through a set of constraints, which can be mapped into \ac{LTL} formulae over finite traces \cite{DBLP:journals/tweb/MontaliPACMS10,DBLP:conf/ijcai/GiacomoV13}. These constraints must all hold true when a trace complete.

Declare specifies a set of templates that can be used to model the process. 
A constraint is a concrete instantiation of a template involving one ore more process activities.
For example, the constraint \textsf{EXISTENCE(a)} is an instantiation of the template \textsf{EXISTENCE(X)}, and is used to specify that activity \textsf{a} must occur in every trace; \textsf{INIT(a)} specifies that all traces must start with \textsf{a}. \textsf{RESPONSE(a,b)} imposes that if the \textsf{a} occurs, then \textsf{b} must follow, possibly with other activities in between. 
For a description of the most common Declare templates see \cite{2008-Pesic}. 

We assume that the log contains both \emph{positive} traces---i.e., satisfying all the constraints in the business model---and \emph{negative} traces---i.e., diverging from the expected behaviour by violating at least one constraint in the (intended) model. 


\paragrafo{Language bias} Given a set of Declare templates $D$ and a set of activities $A$, we identify with $D[A]$ the set of all possible grounding of templates in $D$ w.r.t. $A$, i.e. all the constraints that can be built using the given activities.


\paragrafo{Traces and Logs} We assume that a \emph{Trace} $t$ is a \emph{finite} word over the set of activities (i.e., $t\in A^*$, where $A^*$ is the set of all the words that can be build on the alphabet defined by $A$).
Usually a log is defined as a multi-set of traces, thus allowing multiple occurrences of the same trace: the frequency of a certain trace is then considered as an indicator, for example, of the importance of that trace within the process. Since our goal is to learn a (possibly \emph{minimal}) set of constraints able to discriminate between two example classes, we rather opt to consider a \emph{Log} as a \emph{finite set} of traces. As a consequence, multiple occurrences of the same trace will not affect our discovery process.
%

\paragrafo{Declare constraints satisfaction and violation} Recalling the $LTL_f$ semantics \cite{DBLP:journals/tweb/MontaliPACMS10,DBLP:conf/ijcai/GiacomoV13}, and referring to the standard Declare semantics as in \cite{2008-Pesic}, we say that a constraint $c$ \emph{accepts} a trace $t$, or equivalently that $t$ \emph{satisfies} $c$, if $t \models c$. Similarly, a constraint $c$ rejects a trace $t$, or equivalently $t$ \emph{violates} $c$, if $t \not\models c$. Given that a Declare model $M$ is a conjunction of constraints, it follows that $M$ \emph{accepts} a trace $t$ ($t$ \emph{satisfies} $M$) if $\forall c \in M, t \models c$. Analogously, a model $M$ \emph{rejects} a trace $t$ ($t$ \emph{violates} $M$) if $\exists c \in M, t\not\models c$. In the following, we will write $t \models M$ meaning that M accepts t.

\paragrafo{Positive and negative examples} Finally, we respectively denote with $L^+$ and $L^-$ the sets of positive and negative examples (traces), reported in the input event log. We assume that:
\begin{enumerate*}[label=(\textit{\roman*})]
\item $L^+ \cap L^- = \varnothing$, and 
\item for each trace $t \in L^-$ there exists at least one grounded Declare constraint $c \in D[A]$ that accepts all the positive traces and excludes $t$.
\end{enumerate*}
In other words, we assume that the problem is feasible\footnote{Notice that sometimes real cases might not fulfill these assumptions. We will discuss this issue in section \ref{subsec:impl}}.





\section{The approach}
\label{sec:approach}

\nd aims to extract a model which correctly classifies the log traces by accepting all cases in $L^+$ and rejecting those in $L^-$\footnote{The conditions on accepting all the positives and none of the negatives can be relaxed by requiring only a percentage of them.}. Besides that, it is required to perform an abstraction step in order to be able to classify also unknown traces, which are not in the input log. 

\subsection{Definitions}
Before introducing our approach, we hereby provide some preliminary definitions that are relevant for the following explanation.



\paragrafo{Model generality.} The notion of a model accepting a trace, often referred as the compliance of a trace w.r.t. the model, naturally introduces the relation of \emph{generality} (or the converse \emph{specificity}) between models.
Intuitively, a model $M$ is more general than another $M'$ if $M$ accepts a superset of the traces accepted by $M'$. That is, denoting with $T_M$ the set of all traces compliant with a model $M$, $M$ is more general than another model $M'$---and symmetrically $M'$ is more specific than $M$---if and only if $T_{M'} \subseteq T_M$. 
More precisely, we say that 
\theoremstyle{definition}\label{def:generality}
\begin{definition}{}
a model $M\subseteq D[A]$ is \emph{more general} than $M'\subseteq D[A]$ (written as $M \succeq M'$) when for any $t\in A^*$, $t\models M' \Rightarrow t\models M$ , and \emph{strictly more general} (written as $M \succ M'$) if $M$ is more general than $M'$ and there exists a $t'\in A^*$ s.t.\ $t'\not\models M'$ and $t'\models M$.
\end{definition}

Note that this definition is consistent with that of subsumption between Declare templates provided in Di Ciccio et al. \cite{2017-DiCiccio}. Indeed, Declare templates can be organised into a subsumption hierarchy according to the logical implications that can be derived from their semantics.
\begin{example}{}
The constraint \textsf{INIT(a)} accepts only traces that start with \textsf{a}. Hence, \textsf{a} exists in each one of those accepted traces. In other words, all those traces satisfy also the constraint \textsf{EXISTENCE(a)}. However, the latter constraint accepts also traces that contains \textsf{a} even if they do not start with \textsf{a}. This relation is valid irrespectively of the involved activity. In a sense, we could say that the template \textsf{EXISTENCE(X)} is \emph{more general} than \textsf{INIT(X)}.
\end{example}
This idea is frequently expressed through the subsumption operator $\sqsupseteq$. Given two templates $d, d' \in D$, we say that $d$ \emph{subsumes} $d'$, i.e. $d$ \emph{is more general than} $d'$ (written $d\sqsupseteq d'$), if for any grounding of the involved parameters w.r.t. the activities in $A$, whenever a trace $t \in A^*$ is compliant with $d'$, it is also compliant with $d$ \cite{2017-DiCiccio} .

\begin{remark}{}\label{re:subset-generality}
For any pair of models $M, M'\subseteq D[A]$, $M\subseteq M'$ implies that $M$ is more general than $M'$ ($M\succeq M'$). This stems from the Declare semantics \cite{2008-Pesic} on $LTL_f$ \cite{DBLP:conf/ijcai/GiacomoV13}.
\end{remark} 

Unfortunately, the opposite implication does not hold, i.e. if we have $M, M'\subseteq D[A]$ such that $M\succeq M'$, we cannot guarantee that $M\subseteq M'$. A clear example is $M=\{\textsf{EXISTENCE(a)}\}$ and $M'=\{\textsf{INIT(a)}\}$.

\paragrafo{Initial model} A wide body of research has been devoted to techniques to mine declarative process models that characterise a given event log (our positive traces). Our approach can leverage these techniques and refine their results by taking into account the negative examples as well. To this end, we consider a---possibly empty---\emph{initial model} $P$, i.e. a set of Declare constraints that are known to characterise the positive traces. For example, such set can be the expression of domain knowledge or the result of a state-of-the-art discovery algorithm previously applied to $L^+$. To apply our technique we only require that all the positive traces are compliant with all the constraints in $P$. 
We are aware that often state-of-the-art approaches do not emit a model compliant with all the traces in the input log. In these cases, we consider as positive only the traces that are allowed by $P$.

\paragrafo{Candidate solution} As the goal of our technique is to refine the initial model $P$ taking into account both positive and negative traces, we can define which are the necessary conditions for a set of constraints $S$ to be a candidate solution for our discovery task.

\theoremstyle{definition}
\begin{definition}{}\label{def:cand}
Given the initial model $P$, a \emph{candidate solution} for the discovery task is any $S\subseteq D[A]$ s.t.
\begin{enumerate} [label=\textit{(\roman*)}]
  \item $P\subseteq S$;
  \item $\forall t\in L^+$ we have $t\models S$;
  \item $\forall t\in L^-$ we have $t\not\models S$.
\end{enumerate}
\end{definition}


\paragrafo{Optimality criterion} Clearly, there can be several sets satisfying these conditions. They differ from the way they classify the unknown traces, which are not in $L^+$, nor in $L^-$. Therefore, we need to introduce some way to compare the multiple output models in order to identify the preferable ones.
%
In some context, \emph{generality} can be a measure of the quality of the solution, i.e. we want to identify the set that is less committing in terms of restricting the admitted traces. In some other context on the contrary, we might be interested in the identification of a more specific model. So besides allowing all traces in $L^+$ and forbidding all traces in $L^-$, the choice between a general or specific model, obviously affects the classification of the unknown traces. Alternatively, \emph{simplicity} is another criterion: one can be interested in the most \emph{simple} solution, i.e. the one that is presumed to be easier to understand irrespectively from the degree of generality/specificity it accomplishes.


Let us focus on \emph{generality} for the moment. In this case, we are interested in the candidate solution $S$ (i.e., satisfying the properties of Definition \ref{def:cand}) such that there is no other candidate solution $S'\subseteq D[A]$  strictly more general than $S$ (i.e., $\nexists~ S'$ s.t. $S\prec S'$).

Although testing for strict generality between two set of constraints is a decidable problem, its worst case complexity makes an exact algorithm unfeasible because, recalling definition \ref{def:generality}, it would require to asses the compliance of any trace $t \in A^*$ with the two models going to be compared.
For this reason, we propose an alternative method based on comparing the logical consequences that can be deducted from the models.

The method makes use of a set of deduction rules which account for the \emph{subsumption} between Declare templates. Our work integrates the rules introduced in \cite{2017-DiCiccio}, into a function, namely the \emph{deductive closure operator}, which satisfies the properties of extensivity, monotonicity, and idempotence.

\theoremstyle{definition}\label{def:closure}
\begin{definition}{}
Given a set $R$ of subsumption rules, a \emph{deductive closure operator} is a function $cl_R: \mathcal{P}(D[A])\rightarrow\mathcal{P}(D[A])$ that associates any set $M \in D[A]$ with all the constraints that can be logically derived from $M$ by applying one or more deduction rules in $R$.
\end{definition}

\begin{example}
Let be:
\begin{itemize}
\item $R=\{\ \textsf{EXISTENCE(X)} \sqsupseteq \textsf{INIT(X)}\ \}$
\item $M'=\{ \textsf{INIT(a)}\ \}$
\end{itemize}
If we apply the deductive closure operator $cl_R$ to $M'$, we get:
$$
cl_R(M') = \{\ \textsf{INIT(a), EXISTENCE(a)}\ \}
$$
\end{example}

\noindent Obviously, we are interested in sets $R$ of \emph{correct} subsumption rules, that is for any $M\subseteq D[A]$ and $t\in A^*$, $t\models M$ iff $t\models cl_R(M)$. In the rest of the paper, for the easy of understanding, we will omit the set $R$ and we will simply write $cl(M)$. The complete set of employed rules is available\footnote{The repository is freely available at \url{https://zenodo.org/record/5158528}, and the experiments can be run through a Jupyter Notebook at \url{https://mybinder.org/v2/zenodo/10.5281/zenodo.5158528/}}.

As the closure of a model is again a subset of $D[A]$, Remark \ref{re:subset-generality} is also applicable, i.e. for any $M, M'\subseteq D[A]$, $cl(M)\subseteq cl(M')$ implies that $M$ is more general than $M'$ ($M\succeq M'$).
%
%
Thanks to this property, the deductive closure operator can be used to compare Declare models w.r.t.~generality. To provide an intuition, let us consider the following example:

\begin{example}
Let be:
\begin{itemize}
\item $R=\{\ \textsf{EXISTENCE(X)} \sqsupseteq \textsf{INIT(X)}\ \}$
\item $M = \{\ \mathsf{EXISTENCE(a)}\ \}$
\item $M' = \{\ \mathsf{INIT(a)}\ \}$
\end{itemize}
We cannot express any subset relation between $M$ and $M'$, thus making Remark \ref{re:subset-generality} inapplicable.
Nonetheless, if we take into account their closure, we have:
\begin{align*}
	& cl(M)=\{\mathsf{EXISTENCE(a)}\}\quad \textnormal{and} \\
	& cl(M')=\{\textsf{INIT(a)}, \textsf{EXISTENCE(a)}\}
\end{align*}
As $cl(M)$ is a subset of $cl(M')$, we conclude that $M$ is more general than $M'$.
\end{example}
\noindent In other words, the closure operator (being based on the subsumption rules) captures the logical consequences deriving from the Declare semantics.
Due to the nature of the Declare language we cannot provide a complete calculus for the language of conjunctions of Declare constraints. For this reason, we cannot guarantee the strictness, nor the opposite implication (i.e. $M'\preceq M$ does not implies $cl(M)\subseteq cl(M')$). Anyway, the closure operator provide us a powerful tool for confronting candidate solutions w.r.t. the generality criterion.

\subsection{Two-step procedure}

In this section we introduce the theoretical basis of our \nd approach. The parameters of the problem are the following.
\begin{itemize}
  \item $D$ set of Declare templates (language bias)
  \item $A$ set of activities
  \item $cl$ closure operator equipped with a set $R$ of subsumption rules. $cl: \mathcal{P}(D[A])\rightarrow\mathcal{P}(D[A])$
  \item $L^+$ log traces labelled as positive. $L^+ \subseteq A^*$
  \item $L^-$ log traces labelled as negative. $L^- \subseteq A^*$
  \item $P$ initial model. $P\subseteq \{c\in D[A]\quad | \quad \forall t\in L^+,\, t\models c\}$
\end{itemize}

For the sake of modularity and easiness of experimenting with different hypotheses and parameters, we divide our approach into two clearly separate stages: the first which identifies the candidate constraints, and a second optimisation stage which selects the solutions. However, these two steps are merged into a single monolithic search-based approach. 

Starting from the set of all constraints $D[A]$, the first stage aims at identifying all the constraints of $D[A]$ which accept all positive traces and reject at least a negative one. To this end, it first computes the set of constraints that accepts all traces in $L^+$, namely the set \emph{compatibles}. 
\begin{equation}
{compatibles(D[A], L^+)} = \{c\in D[A]~|~\forall t\in L^+,~ t\models c \} \\
\end{equation}
For simplicity of the notation, since the set of constraints $D[A]$ and the log $L^+$ are given, we will omit them in the following.
The \emph{compatibles} set is then used to build a \textit{\sheriff} function that associates to any trace $t$ in $L^-$ the constraints of \textit{compatibles} that rejects $t$.
The result is therefore a function with domain $L^-$ and co-domain $\mathcal{P}({compatibles})$ s.t.:
\begin{equation}
{\textit{\sheriff}}(t) = \{c\in {compatibles}~|~t\not\models c\} \\
\end{equation}

The second stage aims at finding the optimal solution according to some criterion. Therefore, it starts by computing two sets $\mathcal{C}$ and $\mathcal{Z}$. 
Let $\mathcal{C}$ be the set of those constraints in $D[A]$ that accept all positive traces and reject at least one negative trace. Such set can be derived from the \textit{\sheriff} function as: $\mathcal{C} = \bigcup_{t\in L^-} \textit{\sheriff}(t)$.
Let $\mathcal{Z}$ be all the subsets of $\mathcal{C}$ excluding all negative traces\footnote{As we will discuss in section \ref{subsec:impl}, the implementation must take into account that it might not be always possible to find models fulfilling Eq.\ref{eq:mathcalS}}, i.e.,
\begin{equation}\label{eq:mathcalS}
\mathcal{Z}=\{M\in\mathcal{P}(\mathcal{C})\mid \forall t\in L^-~t\not\models M \} 
\end{equation}

\begin{example}
Let $L^+=\{\mathsf{ab}\}$, $L^-=\{\mathsf{a, b, ba}\}$, and $D=\{\mathsf{EXISTENCE(X)},\mathsf{RESPONSE(Y,Z)}\}$.
The set of activities is $A=\{\ \mathsf{a}, \mathsf{b}\ \}$.
The grounded set of constraints is then $D[A] = \{$ $\mathsf{EXISTENCE(a)}$, $\mathsf{EXISTENCE(b)}$, $\mathsf{RESPONSE(a,b)}$, $\mathsf{RESPONSE(b,a)}\ \}$.

The \textit{compatibles} set would be:
\begin{align*}
compatibles(D[A], L^+) =  \{\ & \mathsf{EXISTENCE(a)},\ \mathsf{EXISTENCE(b)},\\
&  \mathsf{RESPONSE(a,b)} \ \}	
\end{align*}
and the computation of the \textit{\sheriff} function finds:
\begin{subequations}
 \begin{align*}
     &\textit{\sheriff}(\mathsf{a}) =\{\mathsf{EXISTENCE(b)},\ \mathsf{RESPONSE(a,b)}\} \\
     &\textit{\sheriff}(\mathsf{b}) =\{\mathsf{EXISTENCE(a)}\} \\
     &\textit{sheriff}(\mathsf{ba})=\{\mathsf{RESPONSE(a,b)}\}
 \end{align*}
\end{subequations}
In this case, there are two subsets of $\mathcal{C}$ excluding all traces in $L^-$, i.e., $\mathcal{Z}=\{M_1,M_2\}$, where
\begin{subequations}
 \begin{align*}
     & M_1=\{\ \mathsf{EXISTENCE(b)},\ \mathsf{EXISTENCE(a)},\ \mathsf{RESPONSE(a,b)}\ \} \\
     & M_2=\{\ \mathsf{RESPONSE(a,b)},\ \mathsf{EXISTENCE(a)}\ \}
 \end{align*}
\end{subequations}
\end{example}

Once $\mathcal{C}$ and $\mathcal{Z}$ are computed, the goal of the optimisation step is to select the ``best'' model in $\mathcal{Z}$ which can be either devoted to \emph{generality/specificity}, or \emph{simplicity}.
When the most general model is desired, the procedure selects as solution the model $S\in \mathcal{Z}$ such that 
\begin{subequations}
  \begin{align}
    \text{there is no $S'\in\mathcal{Z}$ s.t. } cl(S'\cup P)\subset cl(S\cup P) \label{eq:most-gen}\\
    \text{there is no $S'\subset S$ s.t. } cl(S'\cup P)=cl(S\cup P)\label{eq:redun}
  \end{align}
\end{subequations}
The first condition, Eq. \eqref{eq:most-gen}, ensures generality by selecting the model $S$ for which the logical consequences of $S\cup P$ are the less restricting. 
In this way, the initial model $P$ (containing a set of Declare constraints that are known to characterise $L^+$) is enriched taking into account the information derived by $L^-$.
Furthermore, since from the point of view of generality we are not interested in the content of the selected model, but rather in its logical consequences, the closure operator $cl$ ensures that no other model in $\mathcal{Z}$ is more general than the chosen $S$. 
The second condition, Eq. \eqref{eq:redun}, allows to exclude redundancy inside the selected model $S$ by ensuring that it does not contain constraints that are logical consequence of others in $S$. Considering the previous example, this optimisation step allows to chose model $M_2$ as solution because $\mathsf{EXISTENCE(b)}$ is a logical consequence of $\mathsf{EXISTENCE(a)} \land  \mathsf{RESPONSE(a,b)}$.

If we were interested in the less general model, condition \eqref{eq:most-gen} would be 
\begin{equation}\label{eq:most-spe}
\text{there is no $S'\in\mathcal{Z}$ s.t. } cl(S'\cup P)\supset cl(S\cup P)
\end{equation}
whereas the redundancy constraint would be ensured through the same Eq. \eqref{eq:redun} because, even when we look for the most specific model, redundancy compromises its readability, without adding any value.

Generality/specificity is not the only desirable optimality criterion. If we are interested in the \emph{simplest} model instead, a solution composed of a limited number of constraints is certainly preferable. So, we also experimented with an alternative optimisation formulation based on the set cardinality. The procedure selects the $S \in\mathcal{Z}$ such that:
\begin{subequations}
   \begin{align}
    \text{there is no } S'\in\mathcal{Z} \text{ s.t. } & |cl(S'\cup P)| < |cl(S\cup P)| \label{eq:simpl1}\\
	\begin{split}
    \text{there is no } S'\in\mathcal{Z} \text{ s.t. } & |cl(S'\cup P)|=|cl(S\cup P)| \text{ and}\\
	&  |S'| < |S| \label{eq:simpl2} 
	\end{split}
   \end{align}
\end{subequations}
where the first equation selects the set with the smaller closure, whereas the second allows to choose the solution with less constraints among those with closure of equal cardinality.


\theoremstyle{definition}\label{th:subset-generality}
\begin{theorem}{}
The models that are solution according to the simplicity criterion are also solutions for the generality criterion.
\end{theorem}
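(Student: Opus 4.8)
The plan is to fix an arbitrary $S\in\mathcal{Z}$ that is optimal for the simplicity criterion (i.e.\ satisfies Eq.~\eqref{eq:simpl1} and Eq.~\eqref{eq:simpl2}) and to verify that it also meets the two conditions of the generality criterion, Eq.~\eqref{eq:most-gen} and Eq.~\eqref{eq:redun}. The only structural fact I would need is that $D[A]$ is finite---it is the grounding of finitely many templates over finitely many activities---so every closure $cl(M\cup P)$ is a finite subset of $D[A]$, and hence a proper inclusion of closures forces a strict inequality of their cardinalities.

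For Eq.~\eqref{eq:most-gen} I would argue by contradiction. If some $S'\in\mathcal{Z}$ satisfied $cl(S'\cup P)\subset cl(S\cup P)$, then finiteness would give $|cl(S'\cup P)|<|cl(S\cup P)|$, contradicting the minimality of the closure cardinality imposed by Eq.~\eqref{eq:simpl1}. Thus no such $S'$ exists and Eq.~\eqref{eq:most-gen} holds for $S$.

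For the redundancy condition Eq.~\eqref{eq:redun} I would again argue by contradiction. Assume there is $S'\subset S$ with $cl(S'\cup P)=cl(S\cup P)$. The proper inclusion gives $|S'|<|S|$, and the closure equality gives $|cl(S'\cup P)|=|cl(S\cup P)|$, so the pair $(S',S)$ is precisely the configuration forbidden by Eq.~\eqref{eq:simpl2}---provided $S'$ itself belongs to $\mathcal{Z}$. Membership requires $S'\subseteq\mathcal{C}$, which is immediate from $S'\subset S\subseteq\mathcal{C}$, together with the requirement that $S'$ reject every negative trace. For the latter I would invoke the correctness of the closure operator, $t\models M\iff t\models cl(M)$ for all $t\in A^*$: since $S'\cup P$ and $S\cup P$ share the same closure they accept exactly the same traces, and as $S\in\mathcal{Z}$ makes $S\cup P$ reject all of $L^-$, the set $S'\cup P$ rejects all of $L^-$ as well. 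Deriving $S'\in\mathcal{Z}$ then contradicts Eq.~\eqref{eq:simpl2} and establishes Eq.~\eqref{eq:redun}.

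The step I expect to be the real obstacle is this last one, namely passing from ``$S'\cup P$ rejects every negative'' to ``$S'$ rejects every negative'', since $\mathcal{Z}$ in Eq.~\eqref{eq:mathcalS} speaks of $M$ in isolation. The two statements diverge only if some $t\in L^-$ is rejected solely by a constraint of $P$: indeed, if $t\models S'$, then $t\not\models S$ (because $S\in\mathcal{Z}$) forces $t\not\models cl(S\cup P)=cl(S'\cup P)$ and hence $t\not\models S'\cup P$, so the discrepancy can survive only when $P$ alone excludes $t$. I would therefore close the gap under the mild hypothesis that $P$ excludes no negative trace (all rejection being routed through $\mathcal{C}$, as the \textit{\sheriff} construction arranges), in which case $t\models S'\Rightarrow t\models S'\cup P$ and the implication is immediate; absent this hypothesis one reads $\mathcal{Z}$-membership with respect to $M\cup P$, after which the same closure-correctness argument applies verbatim. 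I would not attempt the converse, which is not claimed and generally fails, as an inclusion-minimal closure need not be of minimal cardinality.
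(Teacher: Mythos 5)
Your proof is correct and follows the same ad absurdum skeleton as the paper's: in the first case a proper inclusion of closures yields $|cl(S'\cup P)|<|cl(S\cup P)|$, contradicting Eq.~\eqref{eq:simpl1} (your explicit appeal to the finiteness of $D[A]$ is the right justification, which the paper leaves implicit behind ``clearly''); in the second case a redundant $S'\subset S$ yields $|S'|<|S|$ with equal closure cardinalities, against Eq.~\eqref{eq:simpl2}. Where you genuinely add something is the membership check that the paper skips: Eq.~\eqref{eq:simpl2} quantifies only over $S'\in\mathcal{Z}$, whereas Eq.~\eqref{eq:redun} quantifies over all $S'\subset S$, so the paper's ``obviously \dots which contradict Eq.~\eqref{eq:simpl2}'' tacitly assumes the witness $S'$ lies in $\mathcal{Z}$. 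Your repair---$S'\subseteq\mathcal{C}$ is inherited from $S$, and correctness of $cl$ ($t\models M\iff t\models cl(M)$) transfers rejection of $L^-$ from $S\cup P$ to $S'\cup P$---is sound, and the residual case you isolate (a negative trace rejected by $P$ alone) is real, not hypothetical: take $P=\{\textsf{INIT(a)}\}$, $L^+=\{\textsf{ab}\}$, $L^-=\{\textsf{b},\textsf{a}\}$, $D=\{\textsf{INIT},\textsf{EXISTENCE}\}$; then every simplicity-optimal $S\in\mathcal{Z}$, e.g.\ $S=\{\textsf{EXISTENCE(b)},\textsf{INIT(a)}\}$, violates Eq.~\eqref{eq:redun} via $S'=\{\textsf{EXISTENCE(b)}\}$, yet $S'\notin\mathcal{Z}$ precisely because the trace $\textsf{b}$ is excluded only by $P$, so the literal statement fails there. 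Your proposed side condition---negative traces satisfy $P$---is exactly what the implementation enforces, since Algorithm~\ref{alg:select} restricts attention to $L'^-=\{t\in L^-\mid \textit{sheriff}(t)\neq\varnothing\wedge t\models P\}$; under that reading (or your alternative of defining membership in $\mathcal{Z}$ via $M\cup P$ in Eq.~\eqref{eq:mathcalS}) the theorem is fully established, and your closing remark that the converse fails matches the paper's own counterexample with $\textit{sheriff}(t_1)=\{c_1,c_2\}$ and $\textit{sheriff}(t_2)=\{c_1,c_3\}$. In short: same route as the paper, but you supply a missing membership step and the precise hypothesis under which it goes through.
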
 
\begin{proof}
Suppose ad absurdum that there is a model $S\in\mathcal{Z}$ that is optimal according to the simplicity criterion of Eq.~\eqref{eq:simpl1} and \eqref{eq:simpl2} but it is not the most general, i.e. either Eq.~\eqref{eq:most-gen} or Eq.~\eqref{eq:redun} are violated for $S$. If $S$ violated Eq.~\eqref{eq:most-gen}, it would exists an $S'\in\mathcal{Z}$ s.t. $cl(S'\cup P)\subset cl(S\cup P)$. But clearly, this implies that $|cl(S'\cup P)| < |cl(S\cup P)|$, which contradicts Eq.~\eqref{eq:simpl1}. On the other hand, if $S$ violated Eq.~\eqref{eq:redun}, it would exists an $S' \subset S$ s.t. $cl(S'\cup P)=cl(S\cup P)$. Obviously we would also have $|S'| < |S|$ and $|cl(S'\cup P)|=|cl(S\cup P)|$, which contradict Eq.~\eqref{eq:simpl2}. 
\end{proof}

Conversely, the opposite implication in Theorem \ref{th:subset-generality} does not necessarily hold. Indeed, let assume that $P$ is empty and $cl$ is the identity function\footnote{This may also be the case when the constraints selected in the first stage are logically independent.}; consider two negative traces $t_1, t_2$ and a \textit{\sheriff} function producing three constraints $\{c_1, c_2,c_3\}$. In particular, $\textit{\sheriff}(t_1)=\{c_1, c_2\}$ and $\textit{\sheriff}(t_2)=\{c_1, c_3\}$. The only simplicity-oriented solution would be  $\{c_1\}$, whereas as regards the generality-oriented solutions we would have both $\{c_1\}, \{c_2, c_3\}$. We must remark that the simplicity principle is based on the intuition that ``smaller'' Declare models should be easier to understand for humans. However, we might notice that, since the two models $\{c_1\}$ and $\{c_2, c_3\}$ are not directly comparable according to their semantics, deciding which is the ``best'' might depend on the constraints themselves, ad well as the specific domain.


\subsection{Implementation}
\label{subsec:impl}

The first stage is implemented via the Algorithm \ref{algcand}, which starts by collecting the set $compatibles$ of the constraints that accept (are satisfied by) all the positive traces (Line \ref{algcand:candidates}). Subsequently, each negative trace is associated (by means of the function \textit{\sheriff}) with those constraints in $compatibles$ that reject (are violated by) a trace in $L^-$ (Line \ref{algcand:choices}). Notice that for some negative example, i.e. for some trace $t\in L^-$ we might have $\textit{sheriff}(t)=\varnothing$ because the chosen language may not be expressive enough to find a constraint able to reject $t$ while admitting all traces of $L^+$. This situation might arise also in case $t$ belongs to both $L^+$ and $L^-$.

\makeatletter
\algrenewcommand\ALG@beginalgorithmic{\footnotesize}
\makeatother

\begin{algorithm}
    \caption{Identification of the constraints accepting all traces in $L^+$ and rejecting at least one trace in $L^-$.}
    \label{algcand}
    \textbf{Input:}  $D[A], L^+, L^-$\\
    \textbf{Output:} $\textit{\sheriff} : L^- \rightarrow \mathcal{P}({D[A]})$
    	\begin{algorithmic}[1] 
   \Procedure{\sheriff Generation}{$D[A],\, L^+,\, L^-$} 
   	\State ${compatibles}= \{c \in D[A]| \forall t \in L^+,\, \Call{compliant}{t,c} = \texttt{True}\}$ 
	\label{algcand:candidates}
	\For {$t \in L^-$}
		\State $\textit{\sheriff}(t) = \{c \in {compatibles} | \Call{compliant}{t,c} = \texttt{False}\}$\label{algcand:choices}
	\EndFor
	\State \Return \textit{\sheriff}
    \EndProcedure
    \end{algorithmic}
\end{algorithm}

\makeatletter
\algrenewcommand\ALG@beginalgorithmic{\normalsize}
\makeatother

The implementation of the compliance verification \textproc{compliant} (i.e.\ $t\models c$) leverages the semantics of Declare patterns defined by means of regular expressions \cite{2017-DiCiccio}
~to verify the compliance of the traces. It is implemented in Go language employing a regexp implementation that is guaranteed to run in time linear in the size of the input\footnote{For more details, see the Go package regexp documentation at \url{https://golang.org/pkg/regexp/}}.
\lstset{language=Prolog}

The second optimisation stage has been implemented using the \ac{ASP} system \textsc{Clingo}~\cite{DBLP:journals/corr/GebserKKS14}. The main reason for selecting an \ac{ASP} system for finite domain optimisation is that rules provide an effective and intuitive framework to implement a large class of closure operators. Indeed, all the deductive systems for Declare that we analysed in the literature (see e.g.~\cite{2016-Bernardi,2017-DiCiccio}) can be recasted as Normal Logic Programs~\cite{2008-Lifschitz} by exploiting the assumption that the set of activities is finite and known in advance.
For example the valid formula $\textsc{Init}(a)\implies\textsc{Precedence}(a,b)$ that holds for any pair of activities $a, b$ can be written as the rule
\begin{lstlisting}
  precedence(A,B) :- init(A), activity(B).
\end{lstlisting}
using a specific predicate (\lstinline{activity}) holding the set of activities.


The second stage is implemented as described in Algorithm \ref{alg:select}. The required input parameters, properly encoded as an \ac{ASP} program, are the initial model $P$, the \textit{\sheriff} function computed by Algorithm \label{alg:cand} and a custom function ${is\_better}:\mathcal{P}({D[A]})\times\mathcal{P}({D[A]})\rightarrow \{\texttt{True},\ \texttt{False}\}$. The purpose of the latter is to implement the chosen optimality criterion by taking as input two constraint sets and providing as output a boolean value representing whether the first set is better than the second. If the two sets are not comparable according to the criterion, ${is\_better}$ returns \texttt{False}. Indeed, such a function is the expression of the global or partial ordering that the optimality criterion induces on the solutions.

\begin{algorithm}
    \caption{Selection of the best solutions according to a custom criterion.}
    \label{alg:select}
    \textbf{Input:}  $P,\, \textit{\sheriff} : L^- \rightarrow \mathcal{P}({D[A]})$,$\,{is\_better}:\mathcal{P}({D[A]})\times\mathcal{P}({D[A]})\rightarrow \{\texttt{T},\texttt{F}\}$\\
    \textbf{Output:} $\mathcal{Z}$, i.e. the set of the best solutions
	\begin{algorithmic}[1] 
   \Procedure{Selection}{$\textit{\sheriff},\, P$} 
   	\State $\mathcal{Z}=\varnothing$
  	\State $ L'^- = \{t \in L^- \: | \: \textit{\sheriff}(t) \neq \varnothing \land t\models P\}$
	\State $\mathcal{C} = \bigcup_{t\in L'^-} \textit{\sheriff}(t)$\label{alg:buildC}
	\State \textbf{for each} $S \subseteq \mathcal{C}$ \textbf{s.t.} $\forall t \in L'^-, \; S\cap \textit{\sheriff}(t) \neq \varnothing$ \textbf{do} \label{alg:subsetS}
	\Indent
		\If{$(\forall S'\in\mathcal{Z} \ !is\_better(S',S)\,)$} \label{alg:isbetter}
		\State $\mathcal{Z} \, \leftarrow\, \{S\} \cup \{ S'\in\mathcal{Z}\ |\ !is\_better(S,S')\,\}$ \label{alg:previousAreOk}
		\EndIf
	\EndIndent
	\State \Return $\mathcal{Z}$  
    \EndProcedure
    \end{algorithmic}
\end{algorithm}

The algorithm starts by computing a set $L'^-$ of all those negative traces that can be excluded by at least a constraint ($\textit{\sheriff}(t)\neq\varnothing$) and are still instead accepted by the initial model $P$ ($t\models P$). 
Indeed, albeit from the theoretical point of view we assumed that for each trace $t \in L^-$ there exists at least one Declare constraint in $D[A]$ accepting all positive traces and discarding $t$, real cases might not fulfil this assumption.
When it is not possible to exclude a negative trace $t$, Algorithm \ref{algcand} returns $\textit{\sheriff}(t)=\varnothing$, and Algorithm \ref{alg:select} overlooks this case by computing $L'^-$. 
The set $\mathcal{C}$ is then build (Line \ref{alg:buildC}) by considering the constraints allowing all traces in $L^+$ and disallowing at least one trace in $L'^-$ (Line \ref{alg:buildC}). From that, the algorithm selects any subset $S$ fulfilling the condition $\forall t \in L'^- \; S\cap {choices}(t) \neq \varnothing$ (Line \ref{alg:subsetS}), i.e., any $S$ accepting all positive traces and rejecting all the negatives that can be actually excluded.

Any such $S$ is then included in the solution set $\mathcal{Z}$ if the latter does not contain another solution $S'$ that is better than $S$ according to the custom optimality criterion expressed by the ${is\_better}$ operator (Line \ref{alg:isbetter}). Solutions $S'$ previously founded are kept into $\mathcal{Z}$ only if the newly found solution $S$ is not better than them (Line \ref{alg:previousAreOk}). Notice that both Line \ref{alg:isbetter} and \ref{alg:previousAreOk} make use of the function $is\_better$ in a negated form: this is due to the fact that, according to the chosen optimality criterion, sometimes it would not be possible to compare two solutions.

Regarding the optimality criterion, for example we could consider \emph{generality}. In that case, ${is\_better}$ employs the criteria of Eq. \eqref{eq:most-gen} (for generality) and \eqref{eq:redun} (to avoid redundancy). Conversely, if we are interested in the most specific solution, ${is\_better}$ must implement Eq. \eqref{eq:most-spe} and \eqref{eq:redun}.
Finally, when the optimality criterion is \emph{simplicity}, Eq. \eqref{eq:simpl1} and \eqref{eq:simpl2} must be used.

In real cases, the returned set $\mathcal{Z}$ might contain several solutions. 
If the number of solutions provided by the procedure is too high for human intelligibility, the optimality condition could be further refined by inducing a preference order in the returned solution. For example, among the most general solutions one can be interested in being reported first those models with the lower number of constraints, or with certain Declare templates. The advantage of our approach is precisely in the possibility to implement off-the-shelves optimisation strategies, where---adapting the $is\_better$ function or even the definition of the closure operator---the developer can easily experiment with different criteria.

\begin{example}
Consider the sets of positive and negative examples composed by only one trace each: $L^+=\{\textsf{bac}\}$ and $L^-=\{\textsf{ab}\}$. Suppose also that:
\begin{itemize}
\item $P=\varnothing$;
\item $D=\{\ \textsf{EXISTENCE(X), INIT(X)}\ \}$;
\item the alphabet of activities is just $A=\{\textsf{a, b, c}\}$.
\end{itemize}
Then, the set of ground constraints can be easily elicited: $D[A]=\{$ \textsf{EXISTENCE(a), EXISTENCE(b), EXISTENCE(c), INIT(a), INIT(b), INIT(c)}$\ \}$.

If we want to learn the most general model, Algorithm \ref{algcand} elects the following compatible constraints:
\begin{align*}
	{compatibles}=\{ & \textsf{EXISTENCE(a), EXISTENCE(b)},\\
	& \textsf{EXISTENCE(c), INIT(b)}\}
\end{align*}
\noindent and emits:
$$\textit{\sheriff}(\textsf{ab})=\{\ \textsf{EXISTENCE(c), INIT(b)}\ \}.$$

In this simple case, the subsets satisfying the condition of Line \ref{alg:subsetS} in Algorithm \ref{alg:select} would be: 
\begin{align*}
 S_1= &  \{\textsf{EXISTENCE(c)}\}  \\
 S_2= &  \{\textsf{INIT(b)}\}   \\
 S_3= &  \{\textsf{EXISTENCE(c)}, \textsf{INIT(b)}\} \\
 S_4= & \{\textsf{EXISTENCE(c)}, \textsf{EXISTENCE(a)}\} \\
  ... & \\
 S_n= & \{\textsf{EXISTENCE(c)}, \textsf{INIT(b)}, \textsf{EXISTENCE(a)}\} \\
 ... &
\end{align*}

As we are interested in the most general models, both $S_1=$ $\{\ \textsf{EXISTENCE(c)}\ \}$ and $S_2=\{\ \textsf{INIT(b)}\ \}$ are optimal solutions. Note that these two solutions cannot be compared according to the definitions of generality because there exist traces (such as the unknown trace $\textsf{b}$) compliant with $S_2$ and non-compliant with $S_1$ (i.e., there is no subset relation between ${C}_{S_1}$ and ${C}_{S_2}$).
Obviously, the choice of one model over another influences the classification of all those unknown traces that---being not part of the input log---are not labelled as positive of negative.

If we were interested in the most simple solution instead, $S_1=\{\ \textsf{EXISTENCE(c)}\ \}$ would have been our choice, because its closure is the smaller in cardinality.

Finally, if we were interested in the most specific set of constraints, the application of a convenient ${is\_better}$ function would have determined the choice of $\{$\textsf{EXISTENCE(a), EXISTENCE(c), INIT(b)}$\}$---where the redundancy check of Eq. \eqref{eq:redun} operated by discarding \textsf{EXISTENCE(b)}. 
\end{example}

\newcommand{\todoinfc}[1]{\todo[inline,backgroundcolor=yellow]{FC: #1}}

\section{Experimental evaluation}
\label{sec:eval}


One of the difficulties of evaluating process mining algorithms is that given a log, the underlying model might not be known before. As a consequence, it might be difficult to establish an \emph{ideal model} (a golden standard) to refer and confront with. In this regard, a number of metrics and evaluation indexes have been proposed in the past to evaluate how a discovered model fits a given log \cite{2015-Adriansyah,2014-Broucke,2018-Ponce}. However, those metrics might provide only a partial answer to the question of ``how good'' is the discovered model. 
In the case of \nd, a further issue influences the evaluation process: the difficulty of performing a ``fair'' comparison with existing techniques because the majority of the methods we could access have been designed to use ``positive'' traces only.


We pursued two different evaluation strategies. On one side, we defined a model, and from that model we generated a synthetic, artificial log, taking care that it exhibits a number of desired properties: in a sense, this part of the evaluation can be referred as being about a ``controlled setting''. A first aim is to understand if \nd succeeds to discover a \emph{minimum} set of constraints for distinguishing positive from negative examples; a second aim is to qualitatively evaluate the discovered model, having the possibility to confront it with the original one. Experiments conducted on that synthetic log are reported and discussed in Section \ref{sec:syntheticlog}.

On the other side, we applied \nd to some existing logs, thus evaluating it on some real data set. Again, this experiment has two aims: to understand weakness and strengths of \nd w.r.t. to some relevant literature; and to confront the proposed approach with real-world data---and difficulties that real-world data bring along. Section \ref{sec:realdata} is devoted to present the selected logs and discuss the obtained results.

The source code and the experiments are available\footnote{The repository is published at \url{https://zenodo.org/record/5158528}, and the experiments can be run through a Jupyter Notebook at \url{https://mybinder.org/v2/zenodo/10.5281/zenodo.5158528/}}.


\subsection{Experiments on a synthetic dataset}
\label{sec:syntheticlog}

%


The synthetic log has been generated starting from a Declare model, using a tool \cite{2020-Loreti} based on Abductive Logic Programming. The model has been inspired by the Loan Application process reported in \cite{DBLP:books/sp/DumasRMR18}. In our model, the process starts when the \emph{loan application} is received. Before \emph{assessing the eligibility}, the bank proceeds to \emph{appraise the property} of the customer, and to \emph{assess the loan risk}. Then, the bank can either \emph{reject the application} or \emph{send the acceptance pack} and, optionally, \emph{notify the approval} (if not rejected). During the process execution the bank can also \emph{receive positive} or \emph{negative feedback} (but not both), according to the experience of the loan requester. It is not expected, however, that the bank receives a \emph{negative feedback} if the \emph{acceptance pack} has been sent. Moreover, due to temporal optimization, the bank requires that the \emph{appraise of the property} is done before \emph{assessing the loan risk}.
To ease the understanding of the loan application process, a Declare model of the process is reported in Fig. \ref{fig:ex}. Moreover, all the activities have been constrained to either not be executed at all, or to be executed at most once: in Declare terminology, all the activities have been constrained to \textsf{absence2(X)}.

\begin{figure}[t]
\centering
\includegraphics[width=0.6\columnwidth]{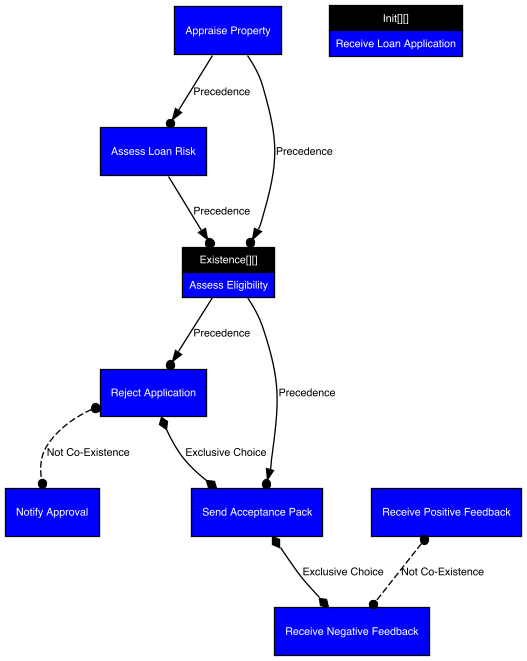}
\caption{Loan approval declare process model.}
\label{fig:ex}
\end{figure}

To test \nd, besides positive traces, we generated also negative traces. In particular, we generated traces that violate two different constraints:
\begin{enumerate}[label=(\textit{\alph*})]
\item the \textsf{precedence(assess\_loan\_risk, assess\_eligibility)}, that is violated when either the latter activity is executed and the former is absent, or if both the activities appear in the log, but in the wrong temporal order;
\item the \textsf{exclusive\_choice(send\_acceptance\_pack, receive\_negative\_feedback)}, that is violated when a trace either contains both the activities, or does not contain any of them.
\end{enumerate}
The resulting log consists of 64,000 positives traces, 25,600 traces that violate the constraint as in $(a)$, and 10,240 traces that violate the constraint as specified in $(b)$.
When fed with the positives traces and traces violating the constraint in $(a)$, \nd successfully manages to identify constraints that allow to clearly distinguish positives from negatives traces. Moreover, the discovered constraint coincides with the one we originally decided to violate during the generation phase. When confronted with the scenario $(b)$, \nd again successfully managed to identify a minimum model able to discriminate between positive and negative traces, and the identified constraint is indeed logically consistent with the constraint originally selected for the violation.
Table \ref{tab:syntResults} summarize the obtained results and reports the first selected model for each scenario.

\begin{table*}
\scalebox{.87}{
\begin{tabular}{c c c c l l}
\toprule
Scenario & Positive Trace \# & Negative Trace \#  & Time & Originally Violated Constraint & First Discovered Model \\
\midrule
$\multirow{4}{*}{(a)}$ & \multirow{4}{*}{64,000} & \multirow{4}{*}{25,600} & \bf{Total: \fpeval{81.78 + 1.94 + 109.74 + 3.32 + 15.165}s}  \\
& & & Compatibles:  \fpeval{81.78 + 1.94}s & \emph{precedence(assess\_loan\_risk,}  & \emph{precedence(assess\_loan\_risk,} \\
& & & Choices:  \fpeval{109.74 + 3.32}s  & \emph{assess\_eligibility)} & \emph{assess\_eligibility)} \\
& & & Optimisation: 15.165s & & \\
\midrule
$\multirow{4}{*}{(b)}$ & \multirow{4}{*}{64,000} & \multirow{4}{*}{10,240} & \bf{Total: \fpeval{81.78 + 1.94 + 94.51 + 2.96 + 1.379}s} \\
& & & Compatibles:  \fpeval{81.78 + 1.94}s & \emph{exclusive\_choice(send\_acceptance\_pack,} & \emph{coExistence(reject\_application,} \\
& & & Choices:  \fpeval{94.51 + 2.96}s  & \emph{receive\_negative\_ feedback)} & \emph{receive\_negative\_feedback)}\\
& & & Optimisation: 1.379s & & \\
\bottomrule
\end{tabular}
}
\caption{Models discovered when dealing with the synthetic data set.}
\label{tab:syntResults}
\end{table*}


For the sake of completeness, we decided to experiment also with the Process Discovery Tool of the Rum Framework\footnote{\url{https://rulemining.org/}}, that is based on the Declare Miner algorithm \cite{2018a-Maggi}. Based on the exploitation of positive traces only, Declare Miner discovers a rich model that describes as ``most exactly'' as possible the given traces. When fed with the positive traces of our artificial log, and with the \emph{coverage} parameter set to 100\% (i.e., prefer constraints that are valid for all the traces in the logs), the RuM Framework discovers a model made of 514 constraints. If the coverage is relaxed to 80\% (prefer constraints that are satisfied by at least the 80\% of the traces), the model cardinality grows up to 1031 constraints.

In both cases the discovered model is able to distinguish between the positive and the negative traces. This is not surprising, since Declare Miner aims to identify all the constraints that hold for a given log: hence, it will discover also those constraints that allow to discern positive from negative traces. Rather, this result is a clear indication that indeed our artificial log has been constructed ``correctly'', since negative traces differ from the positive ones for some specific constraints, and the positive traces exhaustively elicit the cases that can occur. 
This is typical of artificial logs, while real-life logs might not enjoy these properties.

Another consideration is about the cardinality of the discovered model: the Declare Miner approach provides a far richer description of the positive traces, at the cost perhaps of bigger models. Our approach instead has the goal of identifying the \emph{smallest} set of constraints that allow to discriminate between positive and negatives. In this sense, approaches like the one presented in this paper and Declare Miner are complementary.


\subsection{Evaluation on case studies from real data}
\label{sec:realdata}


For the experimentation with real datasets, we used three real-life event logs: \textsc{cerv}, \textsc{sepsis} and \textsc{bpic12}. Starting from these event logs we generated 5 different datasets, each composed of a set of positive and a set of negative traces, by applying different criteria to distinguish between positive and negative traces, i.e., by labeling the event log with different labeling functions. 

\textsc{cerv} is an event log related to the process of cervical cancer screening carried out in an Italian cervical cancer screening center~\cite{2007b-Lamma}. Cervical cancer is a disease in which malignant (cancer) cells form in the tissues of the cervix of the uterus. The screening program proposes several tests in order to early detect and treat cervical cancer. It is usually composed by five phases: Screening planning; Invitation management; First level test with pap-test; Second level test with colposcopy, and eventually biopsy. The traces contained in the event log have been analyzed by a domain expert and labeled as compliant (positive traces) or non-compliant (negative traces) with respect to the cervical cancer screening protocol adopted by the screening center.

\textsc{sepsis}~\cite{Sepsis} is an event log that records trajectories of patients with symptoms of the life-threatening sepsis condition in a Dutch hospital.
Each case logs events since the patient's registration in the emergency room until her discharge from the hospital. Among others, laboratory tests together with their results are recorded as events. The traces contained in the event log have been labelled based on their cycle execution time. In the \textsc{sepsis$_{mean}$} dataset, traces with a cycle time lower than the mean duration of the traces in the event log (\textasciitilde\xspace28 days) have been labelled as positive, as negative otherwise. Similarly, in the \textsc{sepsis$_{median}$}, traces with a cycle time lower than the median duration of the traces in the event log (\textasciitilde\xspace 5 days)  have been labeled as positive; as negative otherwise.  

\textsc{bpic12}~\cite{BPIC2012} is a real-life event log pertaining to the application process for personal loans or overdrafts in a Dutch financial institute. It merges three intertwined sub-processes. Also in this case, the traces have been labelled based on their cycle execution time. In the \textsc{bpic12$_{mean}$} dataset (resp. \textsc{bpic12$_{mean}$}), traces with a cycle time lower than the mean (resp. median) duration of the traces in the event log (\textasciitilde\xspace 8 days, resp. \textasciitilde\xspace 19 hours) have been labelled as positive; as negative otherwise.

Table~\ref{tab:rl_datasets} summarizes the data related to the five resulting datasets.

\begin{table}[h]
	\renewcommand{\arraystretch}{1.2}
	\centering
	\scalebox{0.7}{
		\begin{tabular} {l l c c c c c}
		\toprule
			\multirow{2}{*}{\textbf{Dataset}} & \multirow{2}{*}{\textbf{Log}} & \multirow{2}{*}{\textbf{Trace \#}} & \multirow{2}{*}{\textbf{Activity \#}} & \multirow{2}{*}{\textbf{Label}} & \textbf{Positive} & \textbf{Negative}  \\ 
			& & & & & \textbf{Trace \#} & \textbf{Trace \#} \\ \midrule
			\textsc{cerv$_{compl}$} & \textsc{cerv} & 157 & 16 & compliant & 55 & 102 \\ \midrule
			\textsc{sepsis$_{mean}$} & \multirow{2}{*}{\textsc{sepsis}} & \multirow{2}{*}{1000} & \multirow{2}{*}{16} & mean duration & 838 & 212 \\
			\textsc{sepsis$_{median}$} &  &  &  & median duration & 525 & 525 \\ \midrule
			\textsc{bpic12$_{mean}$} & \multirow{2}{*}{\textsc{bpic12}} & \multirow{2}{*}{13087} & \multirow{2}{*}{36} & mean duration & 8160 & 4927  \\
			\textsc{bpic12$_{median}$} &  &  &  & median duration & 6544  & 6543 \\ 			
			\bottomrule
		\end{tabular}}
		\caption{Dataset description}
		\label{tab:rl_datasets}
\end{table}

The results obtained by applying the \nd algorithm are summarised in Table~\ref{tab:rl_results}. The table reports for each dataset, the results related to the \subsetclos (connected to the \emph{generality} criterion of Eq. \ref{eq:most-gen} and \ref{eq:redun}) and \minclos (\emph{simplicity} criterion of Eq. \ref{eq:simpl1} and \ref{eq:simpl2}) optimizations in terms of number of returned models\footnote{We stop generating models after $20$ models, i.e., \para{max} in Table~\ref{tab:rl_results} indicates that more than $20$ models have been returned.}, minimum size of the returned models, as well as percentage of negative traces violated by the returned model. Moreover, the table reports the time required for computing the set of compatibles, the set of choices, as well as the \subsetclos and \minclos optimizations.

\begin{table*} [h]
	\renewcommand{\arraystretch}{1.2}
	\centering
	\scalebox{0.7}{
		\begin{tabular} {l c c c  c c c c c c c}
		\toprule
			\multirow{4}{*}{\textbf{Dataset}} & \multicolumn{3}{c}{\textbf{\subsetclos}} &  			\multicolumn{3}{c}{\textbf{\minclos}}  & 	\multicolumn{4}{c}{\textbf{ Required Time (s)}} \\ 
			\cmidrule(r){2-4} \cmidrule(r){5-7} \cmidrule(r){8-11}
			& \textbf{Number of} & \textbf{Min model} & \textbf{Violated} & \textbf{Number of } & \textbf{Min model} & \textbf{Violated} & \multirow{2}{*}{\textbf{Comp.}} & \multirow{2}{*}{\textbf{Choices}} & \multirow{2}{*}{\textbf{\subsetclos}} & \multirow{2}{*}{\textbf{\minclos}}  \\			
			& \textbf{models} & \textbf{size} & \textbf{$L^{-}$ trace \%} & \textbf{models} & \textbf{size} & \textbf{$L^{-}$ trace \%} & & & & \\ \midrule
			 \textsc{cerv$_{compl}$} & \para{max} & 4 & 100\% & \para{max} & 4 & 100\% & 0.12 & 0.33 & 0.065 & 0.045\\ \midrule
			\textsc{sepsis$_{mean}$} & 1 & 8 & 4.25\% & 1 & 8  & 4.25\% & 0.73 & 1.04 & 0.039 & 0.035\\ 
			\textsc{sepsis$_{median}$} & \para{max} & 14 & 26.86\% & 16 & 14 & 26.86\% & 0.45 &	1.5 & 0.2 & 0.087 \\ \midrule
			\textsc{bpic12$_{mean}$} & \para{max} & 12 & 1.42\% & \para{max} & 12 & 1.42\% & 13.51 & 31 & 0.096 & 0.066  \\ 
			\textsc{bpic12$_{median}$} & \para{max} & 23 & 36.59\% & \para{max} & 22 & 36.59\% & 13.32 & 37.63 & 359.164 & 43.846 \\ 			
			\bottomrule
		\end{tabular}}
		\caption{\nd results on the real-life logs}
		\label{tab:rl_results}
\end{table*}

The table shows that for the \textsc{cerv$_{compl}$} dataset, \nd is able to return models that satisfy the whole set of positive traces and violate the whole set of negative traces (the percentage of violated traces in $L^-$ is equal to 100\%) with a very low number of constraints (4). For the other datasets, the returned models are always able to satisfy all traces in $L^+$, however not all the negative traces are violated by the returned models. In case of the datasets built by using the mean of the trace cycle time, the percentage of violated traces is relatively small ($4.25\%$ for \textsc{sepsis$_{mean}$} and $1.24\%$ for \textsc{bpic12$_{mean}$}), as the number of constraints of the returned models ($8$ for \textsc{sepsis$_{mean}$} and $12$ for \textsc{bpic12$_{mean}$}). Nevertheless, \nd is able to obtain reasonable results with the real life datasets built with the median of the trace cycle time. Indeed, it is able to identify $14$ (resp. $22$-$23$) constraints able to accept all traces in $L^+$ and to cover about $27\%$ (resp. $37\%$) of the traces in $L^-$ for \textsc{sepsis$_{median}$} (resp. \textsc{bpic12$_{median}$}). The difference in terms of results between the \textsc{cerv$_{compl}$} and the other datasets is not surprising. Indeed, while what characterizes positive and negative traces in the \textsc{cerv$_{compl}$} dataset depends upon the control flow (i.e., it depends on whether each execution complies with the cervical cancer screening protocol adopted by the screening center), when mean and median cycle time are used, the difference between positive and negative traces could likely not exclusively depend upon the control flow of the considered traces. Overall, the inability to identify a set of constraints that is able to fulfil all traces  in $L^+$ and to violate all negative ones is due to a bias of the considered language (Declare without data) that does not allow to explain the positive traces without the negative ones.

The difference of the results obtained with the mean and the median cycle time can also be explained as a language bias issue for the specific labelled datasets. Indeed, while when the positive and negative trace sets are quite balanced (i.e., for \textsc{sepsis$_{median}$} and \textsc{bpic12$_{median}$}) \nd is able to identify a set of constraints (related to the control flow) describing the traces with a low-medium cycle time and excluding the ones with a medium-high cycle time, when the sets of the positive and the negative traces are quite imbalanced (i.e., for \textsc{sepsis$_{mean}$} and \textsc{BPIC12$_{mean}$})
 characterizing the high number of traces with a low or medium cycle time while excluding the ones with a very high cycle time can become hard. 

The table also shows that \nd is overall very fast for small datasets (e.g., less than one minute for \textsc{cerv$_{compl}$}), while it requires some more time for large ones (e.g., \textsc{bpic12$_{mean}$} and \textsc{bpic12$_{median}$}). While the time required for computing \textit{compatibles} and \textit{choices} seems to be related to the size of the dataset, the time required for computing the optimizations seems to depend also on other characteristics of the datasets. 



Compared to state-of-the-art techniques for the discovery of declarative models starting from the only positive traces, \nd is able to return a small number of constraints satisfying all traces in $L^+$ without decreasing the percentage of violated traces in $L^-$.  
Among the classical declarative discovery approach, we selected the state-of-the-art \declareminer algorithm~\cite{2018a-Maggi} implemented in the \rum toolkit~\cite{2020-Alman}. We discovered the models using the only positive traces and setting the \para{support} parameter, which measures the percentage of (positive) traces satisfied by the \declare model, to 100\%\footnote{We run the \declareminer algorithm with vacuity detection disabled, \para{activity support filter} set to 0\%, using both transitive closure and hierarchy-based reduction of the discovered constraints, as well as with the whole set of Declare templates.}.

Table~\ref{tab:rl_declare_miner} summarizes the obtained results. The table reports for each dataset, the size of the model in terms of number of constraints, as well as the percentage of negative traces violated by the model. For lower values of the \para{support} parameter, i.e., for a lower percentage of positive traces satisfied by the model, the model returned by the \declareminer violates a higher percentage of negative traces. In this way, the \para{support} parameter allows for balancing the percentage of positive trace satisfied and negative traces violated. 

As hypothesised, the optimisation mechanism in \nd is able to identify a small set of constraints, that guarantees the satisfaction of all traces in $L^+$ and the same percentage of negative trace violations obtained with \declareminer (with \para{support} to 100\%).

\begin{table} [ht]
	\centering
	\scalebox{0.8}{
		\begin{tabular} {l c c}
		\toprule
			\textbf{Dataset} & \textbf{Model size} & \textbf{Violated L$^{-}$ trace \%}  \\ \midrule
			\textsc{cerv$_{compl}$} & 323 & 100\% \\ \midrule
			\textsc{sepsis$_{mean}$} & 210 & 4.25\%\\ 
			\textsc{sepsis$_{median}$} & 202 & 26.86\% \\ \midrule
      \textsc{bpic12$_{mean}$} & 514 & 1.42\% \\ 
			\textsc{bpic12$_{median}$} & 532 & 36.59\% \\ 
		\bottomrule
		\end{tabular}}
		\caption{\declareminer results}
		\label{tab:rl_declare_miner}
\end{table}

Finally, we evaluated the results obtained with \nd relying on the same procedure and dataset (\textsc{cerv$_{compl}$}) used in ~\cite{2007b-Lamma} to assess the results of \decminer, a state-of-the-art declarative discovery approach based on Inductive Logic Programming that is able to use both positive and negative execution traces. 
Five fold-cross validation is used, i.e., the \textsc{cerv$_{compl}$} dataset is divided into 5 folds and, in each experiment, 4 folds are used for training and the remaining one for validation purposes. The average \emph{accuracy} of the five executions is collected, where the accuracy is defined as the sum of the number of positive (compliant) traces that are (correctly) satisfied by the learned model and the number of negative (non-compliant) traces that are (correctly) violated by the learned model divided by the total number of traces. 
\begin{table} [b]
	\centering
	\scalebox{1}{
		\begin{tabular} {l c}
			\toprule
			\textbf{Approach} & \textbf{Accuracy} \\ \midrule
			 \decminer &  97.44\%\\ 
			 \declareminer & 96.79\%\\ 
			 \nd (\subsetclos) & 97.38\% \\ 			
			 \nd (\minclos) & 97.57\% \\ 
			\bottomrule
		\end{tabular}}
		\caption{Accuracy results obtained with \declareminer, \decminer and \nd}
		\label{tab:acc_results}
\end{table}

Table~\ref{tab:acc_results} reports the obtained accuracy values for the \decminer, the \declareminer (with the \para{support} parameter set to 100\%) and the \nd (both for the \minclos and \subsetclos optimization) approach. The table shows that on this specific dataset, \nd and \decminer have very close performance (with \nd \minclos performing slightly better). \declareminer presents instead on average a slightly lower accuracy mainly due to the highly constrained discovered model that, on one hand, allows for violating all negative traces in the validation set, and, on the other hand, leads to the violation of some of the positive traces in the validation set.


\section{Related work}
\label{sec:related}

Process discovery is generally considered a challenging task of process mining \cite{2012-Maggi}. The majority of works in this field are focused on discovering a business process model from a set of input traces that are supposed compliant with it. In this sense, process discovery can be seen as the application of a machine learning technique to extract a grammar from a set of positive sample data. Angluin et al. \cite{1983-Angliun} provide an interesting overview on this wide field.
Differently from grammar learning, where the model is often expressed with automata, regular expressions or production rules, process discovery usually adopts formalisms that can express concurrency and synchronization in a more understandable way \cite{2009-Goedertier}. 
The language to express the model is a crucial point, which inevitably influences the learning task itself. Indeed, the two macro-categories of business process discovery approaches---procedural and declarative---differ precisely by the type of language to express the model.
Well known examples of procedural process discoverer are the ones presented in the works \cite{2003-Weijters,2004-Aalst,2007-Gunther,2010-Aalst,2013-Leemans,2015-Guo,2017-Augusto, 2019-Augusto}. 
Like most procedural approaches to process discovery, all these works contemplate the presence of non-informative noise in the log, which should be separated from the rest of the log and disregarded.
 
Traditional declarative approaches to process discovery stem from the necessity of a more friendly language to express loosely-structured processes. Indeed---as also pointed out by \cite{2012-Maggi}---process models are sometimes less structured than one could expect. The application of a procedural discovery could produce spaghetti-models. In that case, a declarative approach is more suitable to briefly list all the required or prohibited behaviours in a business process.
Similarly to our technique, the one exposed by Maggi et al. in \cite{2011-Maggi} starts by considering the set of all activities in the log and building a set of all possible candidate Declare constraints. That work stems from the idea that Apriori-like approaches---such as sequence mining \cite{1994-Agrawal} and episode mining \cite{1997-Mannila}---can discover local patterns in a log, but not rules representing prohibited behaviours and choices. Therefore, differently from our algorithm, the candidate Declare constraints are then translated into the equivalent \ac{LTL} and checked (one at a time) against all the log content employing the technique of \cite{2005-Aalst}. The process continue until certain levels of recall and specificity are reached. The performance of this technique is improved in \cite{2012-Maggi} with an interesting two-step approach to both reduce the search space of candidate constraints and exclude from the model those \ac{LTL} formulas that are vacuously satisfied.
Also the work \cite{2012-Schunselaar} by Schunselaar et al. proposes a model refinement to efficiently exclude vacuously satisfied constraints. 
The MINERFul approach described in \cite{2015-DiCiccio} proposes to employ four metrics to guide the declarative discovery approach: support, confidence and interest factor for each constraint w.r.t. the log, and the possibility to include in the search space constraints on prohibited behaviours.
Particularly relevant for our purposes is the work by Di Ciccio et al. \cite{2017-DiCiccio}, who focus on refining Declare models to remove the frequent redundancies and inconsistencies. The algorithms and the hierarchy of constraints described in that work were particularly inspiring to define our discovery procedure.
Similarly to the procedural approaches, all the declarative ones described so far do not deal with negative example, although the vast majority of them envisage the possibility to discard a portion of the log by setting thresholds on the value of specific metrics that the discovered model should satisfy.
 
In the wider field of grammar learning, the foundational work by Gold \cite{1967-Gold} showed how negative examples are crucial to distinguish the right hypothesis among an infinite number of grammars that fit the positive examples. Both positive and negative examples are required to discover a grammar with perfect accuracy. Since process discovery does not usually seek perfection, but only a good performance according to defined metrics, it is not surprising that many procedural and declarative discoverers disregard the negative examples. Nonetheless, in this work we instead claim that negative traces are extremely important when learning declarative process models.
 
The information contained in the negative examples is actively used in a subset of the declarative process discovery approaches \cite{2007-Lamma,2009-Chesani,2010-Bellodi,2016-Bellodi}. All these works can be connected to the basic principles of the \ac{ICL} algorithm \cite{1995-DaRaedt}, whose functioning principle is intrinsically related to the availability of both negative and positive examples. 
The \ac{DPML} described in \cite{2007-Lamma, 2007b-Lamma} by Lamma et al. focuses on learning integrity constraints expressed as logical formulas. The constraints are later translated into an equivalent construct of the declarative graphical language DecSerFlow \cite{2006-Aalst}. Similarly to this approach, the DecMiner tool described in \cite{2009-Chesani}, learns a set of SCIFF rules \cite{2008-Alberti} which correctly classify an input set of labelled examples. Such rules are then translated into ConDec constraints \cite{2006-Pesic}. Differently from \cite{2009-Chesani}, the present work directly learns Declare constraints without any intermediate language.
\ac{DPML} has been later used in \cite{2010-Bellodi} to extract integrity constraints, then converted into Markov Logic formulas. The weight of each formula is determined with a statistical relational learning tool. 
Taking advantage of the negative examples, the approach of \cite{2010-Bellodi}  is improved in \cite{2016-Bellodi}, thus obtaining significantly better results than other process discovery techniques. 
Since for all these works the availability of negative examples is crucial, recent years have seen the development of synthetical log generators able to produce not only positive but also negative process cases \cite{2019-Chesani,2017-Chesani,2020-Loreti,2009-Goedertier, 2014-Stocker, 2010-Hee}. In the experimental evaluation of this work, we employ the abductive-logic generator by Loreti et al. \cite{2020-Loreti} to synthesise part of the input event logs.
 
Particularly related to our approach are the works by Neider et al. \cite{2018-Neider}, Camacho et al. \cite{2019-Camacho}, and Reiner \cite{2019-Riener} where a SAT-based solver is employed to learn a simple set of LTL formulas consistent with an input data set of positive and negative examples. In particular, Neider et al. \cite{2018-Neider} employ decision tree to improve the performance and manage large example sets; Camacho et al. \cite{2019-Camacho} exploit the correspondence of LTL formulae with Alternating Finite Automata (AFA); whereas Reiner uses partial Directed Acyclic Graphs (DAGs) to decompose the search space into smaller subproblems.
The concept of negative example used in this work could be related to both the definitions of syntactical and semantic noise of \cite{2009-Gunther}. In particular, besides being able to extract relevant syntactic information that characterise the positive examples w.r.t. negative, our approach could also be useful to deal with the semantic concept of modification noise i.e., the semantic difference between traces from the same process model, which has been partially or totally modified at a certain point in time.  As a minor point, we also might notice that these works provides in output LTL formulas, while we opt for Declare formulas with LTL$_f$ semantics.
 
It is important to underline that also a limited number of procedural approaches envisage the need for taking into account the information contained into the negative examples. 
In particular, the work \cite{2015-Ponce} showed how negative examples can be employed to alleviate problems like log incompleteness and noise.
The AGNEs tool described in \cite{2009-Goedertier} increases the dimension of an event log with artificially generated negative examples, then uses \ac{ILP} multi-relational classification to discover a Petri net model. 
Negative examples are generated in a rather syntactical way, by adding a unique negative event at the end of a positive trace. This concept is extended in the work of Ponce de Leon et al. \cite{2018-Ponce} which envisage the concatenations of a sequence of negative events. Differently from these approaches, our technique does not assume syntactical restrictions on the input negative examples.

\ac{ILP} is also used in \cite{2006-Ferreira}, where the authors suppose a set of negative examples provided by domain experts. The approach uses partial-order planning to discover a structured model. More recently, the works \cite{2014-Broucke,2014-BrouckePhD} showed how synthetically generated traces can be employed to improve the robustness of the compliance monitor task. 
 
Deviant cases---intended as traces whose sequence of activities deviates from the expected behaviour---are the subject of deviance mining approaches reviewed and evaluated by Nguyen et al. in \cite{2016-Nguyen}. Some applications of deviance mining tend to highlight the differences between models discovered from deviant and non-deviant traces \cite{2014-Suriadi,2014-Armas}. Other works intend deviance mining as a classification task, where the miner is required to identify normal and deviant traces given a set of examples. The classification inherently causes the discovery of patterns which distinguish different types of traces. In this sense, deviance mining is particularly similar to sequence classification. The discovered patterns can be based on the simple frequency of individual activities as in \cite{2013-Suriadi,2015-Partington}, their co-occurrence as in \cite{2011-Swinnen}, or the occurrence of specific subsequences \cite{2013-Bose,2007-Lo,2016-Bernardi}.

%
%
%


\section{Conclusion}
While the vast majority of works see process discovery as a one-class supervised learning task, we embrace the less popular view of process discovery as a binary supervised learning job, where traces representing a ``stranger'' behaviour can be considered as heralds of valuable information about the process itself. Devoted to this vision, we developed a technique which considers both positive and negative traces and performs process discovery as a satisfiability problem, where different heuristics can be adopted to assess the optimal model according to different goals.  

Being able to extract valuable knowledge from negative examples, our proposal \nd can be employed to enrich the process description extracted by a state-of-the-art declarative process discovery algorithm.
However, it is worth to underline that the resulting declarative process discoverer taking advantage of explicitly defined positive and negative examples would not be necessarily an alternative to procedural discovery techniques. 
Indeed in some cases, when correct thresholds and language biases are adopted, procedural discoverers have the great advantage to provide the user with a rather easy-to-understand definition of the process model. Nonetheless, the informative content provided by those process cases that are discarded by procedural discoverer can still be extremely important. 
For the future, as \nd extracts valuable information from $L^-$ without excluding any trace of $L^+$, it could also be applied as a post processing technique to enrich the output of procedural discoverers with declarative constraints.
The resulting output would be an hybrid procedural/declarative process model, showing a simple and handy structured representation of the main business process together with a set of declarative constraints. The goal of such constraints would be to account for less frequent deviances or prohibited behaviours in a much more synthetic and easy-to-understand way with respect to an equivalent spaghetti-like procedural formulation.

Furthermore, such a hybrid solution could also greatly simplifying the elicitation of long-term dependencies between activities that occur at the beginning of the process and those carried out towards the end. Indeed, the structured nature of procedural approaches makes them not properly suitable to express such dependencies.
One current way to tackle such issue is through the employment of global variables and ``if'' statements to control the execution flow of each instance. For example, Kalenkova et al. \cite{2020-Kalenkova} propose a process discovery technique devoted to repair free-choice procedural workflows with additional modelling constructs, which can more easily capture non-local dependencies. Nonetheless, since such additional constraints are intended to preserve the procedural nature of the model, the result may increase its complexity and ultimately affect its readability.
An hybrid procedural/declarative model formulation would maintain a structured form to express the model while integrating it with handy declarative long-term constraints involving activities occurring far from each other in the workflow.
%
%
This idea of a hybrid procedural/declarative model formulation has been explored by various works and proved to be particularly effective in the field of medical clinical guidelines \cite{2009a-Bottrighi,2009b-Bottrighi, 2011-Bottrighi}. A wider landscape of applications is considered by Maggi et al. in the work \cite{2018b-Maggi}. 

Finally, the performance of \nd presented in this paper could be boosted through a parallel approach. Analogously to previous works \cite{2018a-Maggi, 2018-Loreti, 2020b-Loreti} we can envisage two possible directions to decompose our task: by spitting the model (i.e. in this case the set of constraints to be learned), or the input data (i.e. the business log). The algorithm presented here could easily adopt the first kind of partitioning, whereas the second might be more challenging.

\section*{Acknowledgment}
This work was partially supported by the European Commission funded project ``Humane AI: Toward AI Systems That Augment and Empower Humans by Understanding Us, our Society and the World Around Us'' (grant \# 820437). The support is gratefully acknowledged.


\bibliographystyle{alpha}

\newcommand{\etalchar}[1]{$^{#1}$}

\end{document}